\providecommand{\method}{\textsc{Fairy2i}}
\providecommand{\RePart}{\operatorname{Re}}
\providecommand{\ImPart}{\operatorname{Im}}
\definecolor{paperBlue}{RGB}{218, 232, 252}
\definecolor{paperRed}{RGB}{248, 206, 204}
\definecolor{paperGray}{RGB}{245, 245, 245} 
\definecolor{paperYellow}{RGB}{255, 242, 204}
\definecolor{borderGray}{RGB}{102, 102, 102}
\definecolor{arrowColor}{RGB}{120, 120, 120} 
\definecolor{codeGreen}{RGB}{213, 232, 212}
\newtheorem{theorem}{Theorem}
\title{\textsc{Fairy2i}: Training Complex LLMs from Real LLMs with \\All Parameters in $\{\pm 1, \pm i\}$}
\author{
Feiyu Wang,
Xinyu Tan,
Bokai Huang,
Yihao Zhang,\\
Guoan Wang,
Peizhuang Cong,
Tong Yang\thanks{Corresponding author: \texttt{yangtong@pku.edu.cn}}
\\[0.5em]
\small
Peking University
}
\date{}
\begin{document}
\maketitle

\begin{abstract}
Large language models (LLMs) have revolutionized artificial intelligence, yet their massive memory and computational demands necessitate aggressive quantization, increasingly pushing representations toward the theoretical limit of a single bit.
While complex-valued LLMs, such as iFairy, offer a superior chance for low-bit representation compared to real-valued counterparts, they require training from scratch, preventing the utilization of the vast ecosystem of pre-trained real-valued foundation models.
Here we present \textsc{Fairy2i}, a universal framework that transforms pre-trained real-valued layers into an equivalent widely-linear complex form, enabling extremely low-bit quantization while reusing existing checkpoints.
By proving a lossless mathematical equivalence between real and widely-linear maps, we convert standard Transformers into the complex domain and employ a phase-aware quantization scheme with a highly efficient codebook of fourth roots of unity ($\{\pm 1, \pm i\}$).
Furthermore, we introduce a recursive residual quantization mechanism that iteratively minimizes quantization error, allowing inference to proceed via efficient multiplication-free accumulation.
We demonstrate that \textsc{Fairy2i} restores the performance of LLaMA-2 7B at an effective 2-bit precision to levels nearly comparable with full-precision baselines, significantly outperforming state-of-the-art real-valued binary and ternary quantization methods.
This work bridges the gap between the representational efficiency of complex-valued arithmetic and the practical utility of pre-trained models, paving a new way for efficient inference on commodity hardware. We open-source the \textsc{Fairy2i} model and code at \url{https://huggingface.co/PKU-DS-LAB/Fairy2i-W2} and \url{https://github.com/PKULab1806/Fairy2i-W2}.

\end{abstract}

\section{Introduction}
\label{sec: intro}
Large language models (LLMs) keep growing in width, depth, and sequence length~\cite{achiam2023gpt, touvron2023llama1, touvron2023llama2, dubey2024llama3, yang2025qwen3, guo2025deepseekr1}, pushing memory bandwidth and compute budgets to their limits in both training and deployment~\cite{miao2023inferenceSurvey1, wan2023inferenceSurvey2, kwon2023vllm}.
Quantization seeks to address these bottlenecks by storing weights and activations in a few bits and replacing many floating-point operations with much cheaper integer arithmetic~\cite{zhu2024survey-quant}.
There exist two mainstream workflows for quantization in the literature.
\emph{Post-training quantization (PTQ)}~\cite{frantar2022gptq, lin2023awq, xiao2023smoothquant, chee2023quip, tseng2024quipsharp, liu2024spinquant} is attractive for its simplicity and zero retraining cost. However, at \emph{extremely low bit widths} (e.g., 1–2 bits), it typically suffers from a severe accuracy drop. This failure often stems from compounding factors, including that calibration data are limited, per-tensor or per-group scaling is insufficient to realign codebooks, outlier channels dominate clipping ranges, etc. 
\emph{Quantization-aware training (QAT)}~\cite{liu2023llmqat, chen2024efficientqat, bondarenko2024lowrankqat, liu2025paretoq} inserts the quantizer into the training loop often with straight-through estimators~\cite{bengio2013ste}, allowing the model to adapt to the quantized pattern. While current QAT methods outperform the PTQ methods under extremely low bit budgets, training a model from scratch with QAT is often prohibitively time-consuming and unstable since QAT methods still operate with tiny codebooks, offering insufficient freedom to match layer statistics.

Recently, the \emph{iFairy} framework~\cite{wang2025ifairy} demonstrates the efficacy of transitioning from real-valued to complex-valued transformer architecture within the extremely low-bit quantization regime. By concurrently encoding magnitude and phase, complex weights introduce an additional degree of freedom that enhances modeling flexibility and representational capacity, thereby providing a theoretical basis for the superior performance of quantized complex-valued models. Crucially, \emph{iFairy} adopts the fourth roots of unity on the unit circle ($\{\pm 1, \pm i\}$) as a compact codebook. This approach can fully exploit the 2 bit representation budget and hence achieve better alignment with weight statistics compared to real-valued counterparts, such as BitNet~\cite{wang2023bitnet, wang2025bitnetv2, ma2025bitnet2B4T, ma2024bitnetb1.58}, BitCPM4~\cite{team2025minicpm4}, and ParetoQ~\cite{liu2025paretoq}, which typically rely on a ternary set $\{+1, 0, -1\}$, resulting in the underutilization of the available 2-bit encoding space. Consequently, complex-valued Transformer models exhibit significantly improved accuracy retention at extremely low bit precision compared to state-of-the-art real-valued QAT approaches.
Despite these benefits, iFairy models are trained from scratch with the QAT method.
Reproducing such end-to-end training on larger models demands substantial time and compute, which hinders adoption when strong real-valued checkpoints already exist.
In practice, many users require a path that \emph{reuses} existing real-valued checkpoints and still obtains complex-domain benefits at an extremely low bit width.

To address this problem, we propose \method{}, a universal framework for extremely-low bit QAT which converts a real-valued model into a complex-valued computation pattern and quantizes the converted model with efficient quantization paradigms in the complex domain. 
Specifically, we keep the pretrained real model, re-parameterize each real-valued linear layer in an \emph{equivalent widely-linear complex form}, then \emph{continue training at extremely low bits in the QAT method} with a simple phase-aware quantizer (e.g., PhaseQuant~\cite{wang2025ifairy}), and further \emph{quantize the residual error} to reduce the remaining gap at a tiny extra bit cost.
The whole framework consists of three components, including \textbf{widely-linear representation}, \textbf{phase-aware complex quantization}, and \textbf{recursive residual error quantization}.
In a widely-linear representation~\cite{widely-linear}, we state that any real linear map on an even-dimensional space can be written exactly as the sum of a complex-linear part and a conjugate-linear part.
We apply this to every linear layer of a real-valued model, generally including $\mathbf{Q}$ projection, $\mathbf{K}$ projection, $\mathbf{V}$ projection, and $\mathbf{O}$ projection in the self-attention block, as well as $\text{Up}$ projection, $\text{Gate}$ projection, and $\text{Down}$ projection in the feed-forward network, taking LLaMA-alike architecture as an example. 
For attention, using the real part of the Hermitian inner product of queries and keys preserves the original scores.
It can be proven that the forward computation is unchanged after the above transformation. 
On top of the widely-linear form, phase-aware complex quantization uses a very small complex codebook \(\{\pm1, \pm i\}\) with shared scaling factors across $\pm 1$  and $\pm i$ per group, respectively.
We then train the transformed model with the QAT method while keeping full-precision master weights, so we can reuse existing real-valued checkpoints.
To further reduce quantization error, we propose recursive residual error quantization, which also quantizes the residual error using the same phase-aware mechanism, recursively.
The deployed weight is the sum of the extremely low bit representation of the original weight and its several successive quantization errors.
This residual quantization trades a tiny extra code (one more 1–2 bits per weight, plus small per-group metadata) for a noticeable reduction in error.

We apply \method{} in off-the-shelf mainstream open source LLMs, such as the LLaMA series. The evaluation results validate the efficiency of the \method{} framework. Our contributions can be summarized as follows:
\begin{itemize}
  \item We propose a universal method to transform any real layer into an equivalent widely-linear complex form, keeping the pre-quantization model behavior unchanged.
  \item We propose to continue training at extremely low bits with a phase-aware quantizer, reusing real-valued checkpoints.
  \item We propose to further reduce quantization error by quantizing the residual with the same mechanism, yielding a compact sum of a few low-bit terms, which can significantly reduce the quantization error.
\end{itemize}

The rest of the paper is organized as follows.
Section~\ref{sec: related} reviews related work on quantization and complex-valued neural networks.
Section~\ref{sec: methods} formalizes the complex-valued reparameterization, phase-aware quantizer, and the recursive residual error quantization and introduces the \method{} framework comprehensively.
Section~\ref{sec: eval} presents empirical results on standard backbones.
Section~\ref{sec: conclusion} concludes this paper and discusses future work.

\section{Related Work}
\label{sec: related}

\subsection{Quantization for Large Language Models}
Model quantization has become a de facto standard for compressing Large Language Models (LLMs). Existing methods can be broadly categorized into Post-Training Quantization (PTQ)~\cite{frantar2022gptq} and Quantization-Aware Training (QAT).

\paragraph{Post-training quantization (PTQ).} 
PTQ methods aim to compress models without extensive retraining, often using a small calibration dataset. Early works like AWQ~\cite{lin2023awq} and GPTQ~\cite{frantar2022gptq} focused on preserving the weights' output activation distribution, achieving impressive results at 3-4 bits. More recent approaches like OmniQuant~\cite{shao2023omniquant}, QuIP~\cite{chee2023quip}, QuIP\#~\cite{tseng2024quipsharp} attempt to push the limit to 2 bits. However, PTQ methods typically suffer from severe performance degradation at extremely low bit-widths (e.g., $\le 2$ bits) because the limited discrete codebook of real numbers (e.g., binary quantization of $\{+1, -1\}$ or ternary quantization of $\{+1, 0, -1\}$) cannot adequately approximate the heavy-tailed distribution of LLM weights without adapting the network parameters themselves.

\paragraph{Extremely low-bit QAT and 1-bit architectures.}
To address the limitations of PTQ, QAT methods fine-tune the model parameters to adapt to the quantization noise.
BitNet and its successor BitNet b1.58 introduced a 1-bit (binary $\{+1, -1\}$) and 1.58-bit (ternary $\{+1, 0, -1\}$) architecture, respectively, demonstrating that LLMs can retain performance at extremely low precision if trained from scratch~\cite{liu2025binaryNNSurvey}.
While effective, these methods often require training massive models from random initialization, incurring prohibitive computational costs.
Although some works attempt to distill full-precision models into low-bit counterparts (e.g., LLM-QAT), the rigid topology of real-valued binary/ternary weights fundamentally limits the model's capacity and optimization landscape, leading to instability or convergence issues when strictly enforced on pre-trained weights.

\subsection{Complex-Valued Neural Networks}
The integration of complex numbers into neural networks boasts a rich history, predominantly within domains where data possess intrinsic phase and magnitude, such as signal processing and imaging~\cite{hirose2006complex, trabelsi2017zReLU, lee2022complex-survey, bassey2021survey-survey2, yang2020complextransformer, eilers2023complextransformer2}. By representing weights and activations in the complex domain, Complex-Valued Neural Networks (CVNNs) offer the theoretical potential to capture more intricate patterns and dependencies than their real-valued counterparts.
However, despite this promise, the application of CVNNs to natural language processing—and to Large Language Models (LLMs) in particular—remains remarkably scarce.

\paragraph{The iFairy approach.}
Recently, the \textit{iFairy}~\cite{wang2025ifairy} framework demonstrated the potential of complex-valued quantization for LLMs. By utilizing a phase-aware codebook $\{\pm 1, \pm i\}$ (the fourth roots of unity), \textit{iFairy} showed that complex-valued Transformers could outperform real-valued binary networks at equivalent bit budgets.
However, a major bottleneck of \textit{iFairy} and similar methods is the lack of compatibility with existing real-valued pre-trained models like LLaMA~\cite{touvron2023llama1, touvron2023llama2, dubey2024llama3}, OPT~\cite{zhang2022opt}, and Qwen~\cite{yang2025qwen3}. They typically require training a complex-valued model from scratch, which is impractical for most practitioners given the scale of modern LLMs.

\paragraph{Our contribution.}
\method{} bridges this gap by introducing a \textit{widely-linear representation} that mathematically equates a real-valued layer to a complex-valued one. This allows us to leverage the superior quantization topology of the complex domain, specifically the $\{\pm 1, \pm i\}$ codebook, while initializing directly from powerful real-valued checkpoints, thereby combining the efficiency of low-bit quantization with the accessibility of pre-trained LLMs.

\section{Methods}
\label{sec: methods}
In this section, we introduce the \method{} framework comprehensively.
Our goal is to start from a pretrained \emph{real-valued} model, keep its forward behavior intact before quantization, and then continue training at \emph{extremely low bit widths} with a simple and stable complex quantizer. The method proceeds in three steps that build upon each other.

\textbf{Step 1: Widely-linear transformation (Section~\ref{sec:lin-widely}).}
We first express each real linear layer in an equivalent \emph{widely-linear complex form}.
This transformation is exact and unique, and one can use the real and imaginary parts of complex-valued weights to recover the original real layer, and conversely. Before quantization, the network computes the same outputs as the original real model.

\textbf{Step 2: Phase-aware complex quantization (Section~\ref{sec:quant-ifairy}).}
We then quantize the complex parameters using a phase-based scheme on the unit circle with the fixed codebook $\{\pm 1, \pm i\}$. Each weight is projected to the nearest codeword by angle, and two axis-wise scaling factors (real and imaginary, respectively) restore the magnitude.
During QAT, we maintain full-precision master weights, use the quantized copies in the forward pass, and propagate gradients via a straight-through estimator (STE).
This step is simple to implement and applies identically to all widely-linear layers.

\textbf{Step 3: Recursive residual error quantization (Section~\ref{sec:residual-quant}).}
To further reduce error with a minimal extra budget, we quantize the residual error after the first pass.
At each stage, we project the current residual error to the same small codebook $\{\pm 1, \pm i\}$, subtract the quantized correction, calculate the new residual error, and repeat for a small number of stages.
The finally deployed weight is the sum of these few low-bit terms.

In short, Section~\ref{sec:lin-widely} provides the exact equivalence that allows us to reuse real checkpoints without altering pre-quantization behavior; Section~\ref{sec:quant-ifairy} introduces a stable, extremely low-bit quantizer in the complex plane; and Section~\ref{sec:residual-quant} refines the approximation by recursively quantizing the residual error using the same mechanism.

\subsection{Widely-linear transformation from real domain to complex domain}
\label{sec:lin-widely}
\paragraph{Setup and notation.}
We use column vectors. Let the real input and output be $\tilde{\mathbf{x}}\in\mathbb{R}^{2m}$ and $\tilde{\mathbf{y}}\in\mathbb{R}^{2n}$.
The original linear layer of real-valued models can be represented as:
\begin{equation}
\label{eq:real-layer}
\tilde{\mathbf{y}}=\mathbf{R}\,\tilde{\mathbf{x}},
\qquad
\mathbf{R}\in\mathbb{R}^{(2n)\times(2m)}.
\end{equation}
We pair the last dimension as real/imaginary components and write:
\[
\tilde{\mathbf{x}}=\begin{bmatrix}\RePart\mathbf{x}\\ \ImPart\mathbf{x}\end{bmatrix},\quad
\tilde{\mathbf{y}}=\begin{bmatrix}\RePart\mathbf{y}\\ \ImPart\mathbf{y}\end{bmatrix},
\quad
\mathbf{x}=\RePart\mathbf{x}+i\,\ImPart\mathbf{x}\in\mathbb{C}^{m},\quad
\mathbf{y}=\RePart\mathbf{y}+i\,\ImPart\mathbf{y}\in\mathbb{C}^{n}.
\]
Our goal is to express the same linear map in a widely-linear form:
\begin{equation}
\label{eq:widely-linear}
\mathbf{y}=\mathbf{U}\,\mathbf{x}+\mathbf{W}\,\overline{\mathbf{x}}
\qquad (\mathbf{U},\mathbf{W}\in\mathbb{C}^{n\times m}),
\end{equation}
and to provide an explicit, \emph{lossless} correspondence between $\mathbf{R}$ and $(\mathbf{U},\mathbf{W})$.

\paragraph{Lossless real to complex reparameterization.}
\label{sec:reparam}
Partition $\mathbf{R}$ into $n\times m$ real blocks:
\[
\mathbf{R}=\begin{bmatrix}
\mathbf{R}_{11} & \mathbf{R}_{12}\\[2pt]
\mathbf{R}_{21} & \mathbf{R}_{22}
\end{bmatrix},\qquad
\mathbf{R}_{ij}\in\mathbb{R}^{n\times m}.
\]
Define:
\begin{equation}
\label{eq:UW-from-R}
\begin{aligned}
\RePart\mathbf{U}&=\tfrac{1}{2}(\mathbf{R}_{11}+\mathbf{R}_{22}),&
\ImPart\mathbf{U}&=\tfrac{1}{2}(\mathbf{R}_{21}-\mathbf{R}_{12}),\\[2pt]
\RePart\mathbf{W}&=\tfrac{1}{2}(\mathbf{R}_{11}-\mathbf{R}_{22}),&
\ImPart\mathbf{W}&=\tfrac{1}{2}(\mathbf{R}_{12}+\mathbf{R}_{21}),
\end{aligned}
\end{equation}
and set $\mathbf{U}=\RePart\mathbf{U}+i\,\ImPart\mathbf{U}$, $\mathbf{W}=\RePart\mathbf{W}+i\,\ImPart\mathbf{W}$.

\begin{theorem}[Exact equivalence]
For every $\mathbf{x}\in \mathbb{C}^m$,
\begin{equation}
\label{eq:equivalence}
\tilde{\mathbf{y}}=\mathbf{R}\,\tilde{\mathbf{x}} \quad\Longleftrightarrow\quad
\begin{bmatrix}\RePart \mathbf{y}\\ \ImPart \mathbf{y}\end{bmatrix}
=\mathbf{R}\begin{bmatrix}\RePart \mathbf{x}\\ \ImPart \mathbf{x}\end{bmatrix}
\quad\Longleftrightarrow\quad
\mathbf{y}=\mathbf{U}\mathbf{x}+\mathbf{W}\overline{\mathbf{x}}.
\end{equation}
Moreover, the pair $(\mathbf{U}, \mathbf{W})$ is unique.
\end{theorem}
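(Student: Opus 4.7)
The first equivalence in~\eqref{eq:equivalence} is purely a matter of notation: $\tilde{\mathbf{x}}$ and $\tilde{\mathbf{y}}$ are \emph{defined} as the stacking of real and imaginary parts of $\mathbf{x}$ and $\mathbf{y}$, so both sides literally write the same equation. The substance lies in showing that the widely-linear expression $\mathbf{U}\mathbf{x}+\mathbf{W}\overline{\mathbf{x}}$ with $(\mathbf{U},\mathbf{W})$ built by~\eqref{eq:UW-from-R} produces precisely the real action $\mathbf{R}\tilde{\mathbf{x}}$, and then to pin down $(\mathbf{U},\mathbf{W})$.

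\textbf{Existence (the formulas in~\eqref{eq:UW-from-R} work).} I would set $a:=\RePart\mathbf{x}$ and $b:=\ImPart\mathbf{x}$, expand
\[
\mathbf{U}\mathbf{x}=(\RePart\mathbf{U}+i\,\ImPart\mathbf{U})(a+ib),\qquad
\mathbf{W}\overline{\mathbf{x}}=(\RePart\mathbf{W}+i\,\ImPart\mathbf{W})(a-ib),
\]
and collect real and imaginary parts of the sum. This yields
\[
\RePart(\mathbf{U}\mathbf{x}+\mathbf{W}\overline{\mathbf{x}})=(\RePart\mathbf{U}+\RePart\mathbf{W})\,a+(\ImPart\mathbf{W}-\ImPart\mathbf{U})\,b,
\]
\[
\ImPart(\mathbf{U}\mathbf{x}+\mathbf{W}\overline{\mathbf{x}})=(\ImPart\mathbf{U}+\ImPart\mathbf{W})\,a+(\RePart\mathbf{U}-\RePart\mathbf{W})\,b.
\]
Substituting the definitions~\eqref{eq:UW-from-R} collapses the four coefficient matrices to $\mathbf{R}_{11},\mathbf{R}_{12},\mathbf{R}_{21},\mathbf{R}_{22}$, respectively, which is exactly the block action of $\mathbf{R}$ on $(a,b)^\top$. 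So $\mathbf{y}=\mathbf{U}\mathbf{x}+\mathbf{W}\overline{\mathbf{x}}$ coincides with $\tilde{\mathbf{y}}=\mathbf{R}\tilde{\mathbf{x}}$.

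\textbf{Uniqueness.} Suppose $(\mathbf{U}',\mathbf{W}')$ is another pair satisfying~\eqref{eq:widely-linear} for the same real map. Let $\Delta\mathbf{U}=\mathbf{U}-\mathbf{U}'$ and $\Delta\mathbf{W}=\mathbf{W}-\mathbf{W}'$; then $\Delta\mathbf{U}\,\mathbf{x}+\Delta\mathbf{W}\,\overline{\mathbf{x}}=0$ for every $\mathbf{x}\in\mathbb{C}^m$. Probing with the real standard basis $\mathbf{x}=e_k$ yields $\Delta\mathbf{U}+\Delta\mathbf{W}=0$; probing with the purely imaginary basis $\mathbf{x}=ie_k$ yields $i(\Delta\mathbf{U}-\Delta\mathbf{W})=0$. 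Together these force $\Delta\mathbf{U}=\Delta\mathbf{W}=0$. Equivalently, one can invert~\eqref{eq:UW-from-R} directly: the linear system that recovers $\mathbf{R}_{11},\mathbf{R}_{22}$ from $\RePart\mathbf{U},\RePart\mathbf{W}$ and $\mathbf{R}_{12},\mathbf{R}_{21}$ from $\ImPart\mathbf{U},\ImPart\mathbf{W}$ has determinant $\pm\tfrac12$ and so the map $(\mathbf{U},\mathbf{W})\leftrightarrow\mathbf{R}$ is a bijection.

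\textbf{Main obstacle.} There is no deep difficulty here; the proof is a bookkeeping exercise. The only place to slip is the sign pattern in the expansion of $\mathbf{U}\mathbf{x}+\mathbf{W}\overline{\mathbf{x}}$: the conjugation on $\mathbf{x}$ flips the sign of the $\ImPart\mathbf{W}\cdot b$ cross-term and also flips $\RePart\mathbf{W}\cdot b$ in the imaginary component, which is precisely what forces the particular combinations $\tfrac12(\mathbf{R}_{11}\pm\mathbf{R}_{22})$ and $\tfrac12(\mathbf{R}_{21}\mp\mathbf{R}_{12})$ in~\eqref{eq:UW-from-R}. I would therefore present the existence step as a single four-line identification of blocks and keep the uniqueness argument as the two-probe reduction above, which makes both halves of the ``Moreover'' clause transparent.
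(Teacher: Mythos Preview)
Your proposal is correct and follows essentially the same route as the paper: the paper packages the same block computation via named operators $\mathcal{R}_{\mathrm{lin}}(\mathbf{U})$ and $\mathcal{R}_{\mathrm{conj}}(\mathbf{W})$ and then matches $\mathbf{R}=\mathcal{R}_{\mathrm{lin}}(\mathbf{U})+\mathcal{R}_{\mathrm{conj}}(\mathbf{W})$ blockwise, which is exactly your direct expansion of real and imaginary parts. Your two-probe uniqueness argument ($\mathbf{x}=e_k$ and $\mathbf{x}=ie_k$) is in fact more explicit than the paper's, which simply asserts that~\eqref{eq:UW-from-R} is the only blockwise solution.
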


\begin{proof}
We utilize two block operators $\mathcal{R}_{\mathrm{lin}}$ and $\mathcal{R}_{\mathrm{conj}}$ that embed complex matrices into real $2\times 2$ blocks:
\begin{align}
\label{eq:realify-ops}
\mathcal{R}_{\mathrm{lin}}(\mathbf{U})
&=
\begin{bmatrix}
\RePart \mathbf{U} & -\,\ImPart \mathbf{U}\\[2pt]
\ImPart \mathbf{U} & \ \ \RePart \mathbf{U}
\end{bmatrix},&
\mathcal{R}_{\mathrm{conj}}(\mathbf{W})
&=
\begin{bmatrix}
\RePart \mathbf{W} & \ \ \ImPart \mathbf{W}\\[2pt]
\ImPart \mathbf{W} & -\,\RePart \mathbf{W}
\end{bmatrix}.
\end{align}
They satisfy, for all $\mathbf{x}$:
\begin{equation}
\label{eq:realify-eval}
\begin{bmatrix}\RePart(\mathbf{U}\mathbf{x})\\ \ImPart(\mathbf{U}\mathbf{x})\end{bmatrix}
=\mathcal{R}_{\mathrm{lin}}(\mathbf{U})
\begin{bmatrix}\RePart\mathbf{x}\\ \ImPart\mathbf{x}\end{bmatrix},
\qquad
\begin{bmatrix}\RePart(\mathbf{W}\overline{\mathbf{x}})\\ \ImPart(\mathbf{W}\overline{\mathbf{x}})\end{bmatrix}
=\mathcal{R}_{\mathrm{conj}}(\mathbf{W})
\begin{bmatrix}\RePart\mathbf{x}\\ \ImPart\mathbf{x}\end{bmatrix}.
\end{equation}
Expanding the right-hand sides of Eq.~\eqref{eq:realify-eval} shows that $\mathcal{R}_{\mathrm{lin}}(\mathbf{U})$ and $\mathcal{R}_{\mathrm{conj}}(\mathbf{W})$ act on stacked real/imaginary parts exactly as the complex maps $\mathbf{U}(\cdot)$ and $\mathbf{W}\overline{(\cdot)}$.
Solving $\mathbf{R}=\mathcal{R}_{\mathrm{lin}}(\mathbf{U})+\mathcal{R}_{\mathrm{conj}}(\mathbf{W})$ blockwise yields Eq.~\eqref{eq:UW-from-R}. Substituting back $\mathbf{U},\mathbf{W}$ defined in Eq.~\eqref{eq:UW-from-R}, we have:
\begin{equation}
\label{eq:R-sum}
\mathbf{R}\;=\;\mathcal{R}_{\mathrm{lin}}(\mathbf{U})+\mathcal{R}_{\mathrm{conj}}(\mathbf{W}),
\end{equation}
and consequently, for every $\mathbf{x}$:
\begin{equation}
\tilde{\mathbf{y}}=\mathbf{R}\,\tilde{\mathbf{x}} \quad\Longleftrightarrow\quad
\begin{bmatrix}\RePart \mathbf{y}\\ \ImPart \mathbf{y}\end{bmatrix}
=\mathbf{R}\begin{bmatrix}\RePart \mathbf{x}\\ \ImPart \mathbf{x}\end{bmatrix}
\quad\Longleftrightarrow\quad
\mathbf{y}=\mathbf{U}\mathbf{x}+\mathbf{W}\overline{\mathbf{x}}.
\end{equation}
Uniqueness follows because Eq.~\eqref{eq:UW-from-R} is the only blockwise solution.
\end{proof}

\paragraph{Self-attention in widely-linear form.}
\label{sec:attn}
For Transformer projections, we apply Eq.~\eqref{eq:UW-from-R} independently to $\mathbf{R}_Q,\mathbf{R}_K,\mathbf{R}_V,\mathbf{R}_O$.
Let $\mathbf{q},\mathbf{k},\mathbf{v}$ be the complex-valued outputs of the Q/K/V projections.
We use Hermitian scores:
\begin{equation}
\label{eq:hermitian-score}
\mathbf{S}=\frac{1}{\sqrt{d_k}} \RePart\!\left(\mathbf{q}^{\mathrm{H}}\mathbf{k}\right),
\end{equation}
where $(\cdot)^{\mathrm{H}}$ denotes the conjugate transpose.
This formulation equals the original real dot-product scores when applied to stacked representations, i.e., $\RePart(\mathbf{q}^{\mathrm{H}}\mathbf{k}) = \RePart(\mathbf{q})^\top \RePart(\mathbf{k}) + \ImPart(\mathbf{q})^\top \ImPart(\mathbf{k}) = \tilde{\mathbf{q}}^{\top}\tilde{\mathbf{k}}$.
The softmax and value aggregation act separately on the real and imaginary parts, as the attention score is a real number; the output projection uses the same widely-linear mapping. Note that the self-attention in a widely-linear form is compatible with highly-optimized FlashAttention kernels~\cite{dao2022flashattention, dao2023flashattention2, shah2024flashattention3} since we can unstack the complex-valued $\mathbf{q},\mathbf{k},\mathbf{v}$ into a longer real-valued vector, respectively.

\paragraph{Remarks.} Note that we assume the real linear matrix $\mathbf{R}\in\mathbb{R}^{(2n)\times(2m)}$ has even dimensions.
If the real dimension is odd, we can pad one zero channel to make it even, apply the mapping, and discard the padded output channel. Figure~\ref{fig:transformation} illustrates the widely-linear transformation from a real-valued linear layer to a complex-valued linear layer.

\begin{figure}[!th]
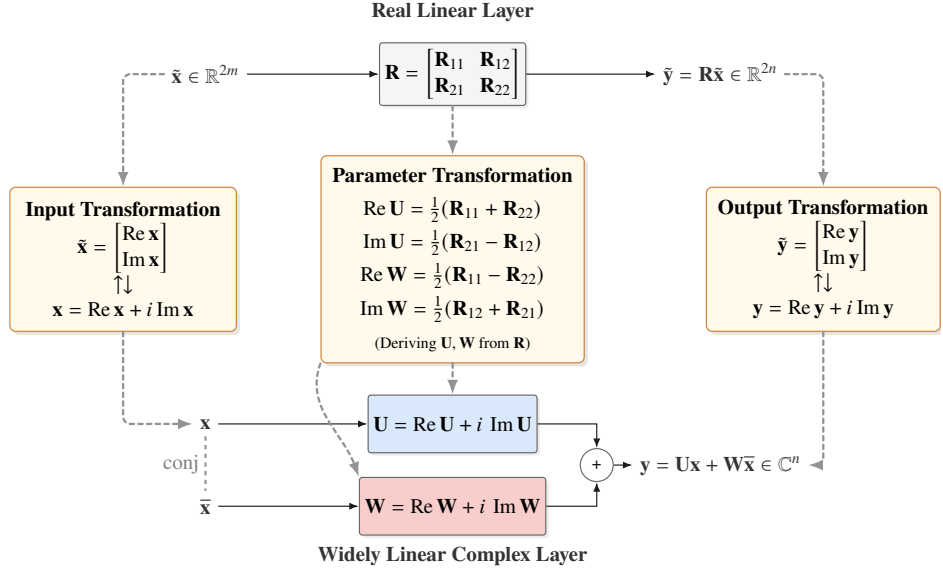

    \centering    \includestandalone[width=0.75\textwidth]{Figure/Fairy2i-transform}
    \caption{Illustration of a widely-linear transformation from the real domain to the complex domain.}
    \label{fig:transformation}
\end{figure}

\subsection{Complex-valued low-bit quantization}
\label{sec:quant-ifairy}

We quantize the complex weights that appear in the widely-linear form (both $\mathbf{U}$ and $\mathbf{W}$) using a simple, phase-based rule called PhaseQuant on the unit circle, adapted from \emph{iFairy}. The codebook is:
\[
\mathcal{S}_{2\text{-bit}}=\{\pm 1, \pm i\}.
\]

\paragraph{Deterministic projection by phase.}
For a weight $w\in\mathbb{C}$, we choose the nearest codeword by angle:
\begin{equation}
\label{eq:ifairy-proj}
b(w)\;=\;\arg\max_{s\in \mathcal{S}_{2\text{-bit}}}\ \RePart\!\big(w\,\overline{s}\big)
\quad\in\ \{\pm1,\pm i\}.
\end{equation}
Equivalently, with $\theta=\operatorname{Arg}(w)\in(-\pi,\pi]$ and $k=\left\lfloor \frac{2\theta}{\pi}+\tfrac12 \right\rfloor$,
\(
b(w)=i^{\,k}.
\)
We write $b(w)=b_{\mathrm{re}}+i\,b_{\mathrm{im}}$ with $b_{\mathrm{re}},b_{\mathrm{im}}\in\{-1,0,+1\}$.

\paragraph{Axis-wise scaling factors.}
To recover magnitude, we estimate two scales from the projected axes:
\begin{equation}
\begin{aligned} 
\label{eq:ifairy-scale}
s_{\mathrm{re}}
\;=\;
\frac{1}{\big|\{w:\,b(w)\in\{\pm1\}\}\big|}
\sum_{b(w)\in\{\pm1\}}\big|\,\RePart(w)\big|,\\
s_{\mathrm{im}}
\;=\;
\frac{1}{\big|\{w:\,b(w)\in\{\pm i\}\}\big|}
\sum_{b(w)\in\{\pm i\}}\big|\,\ImPart(w)\,\big|.
\end{aligned}
\end{equation}
These are per-tensor averages\footnote{In this paper, we only use per-tensor scaling factors and we leave the investigation of various group strategies as future work}.

\paragraph{Dequantization.}
Each weight is replaced in the forward pass by:
\begin{equation}
\label{eq:ifairy-dequant}
\hat w \;=\; s_{\mathrm{re}}\, b_{\mathrm{re}} \;+\; i\, s_{\mathrm{im}}\, b_{\mathrm{im}}.
\end{equation}

\paragraph{QAT training.}
We maintain full-precision masters and use Eq.~\eqref{eq:ifairy-proj}--\eqref{eq:ifairy-dequant} only in the forward pass. Gradients flow through the masters via a straight-through estimator (STE), i.e., we treat $b(\cdot)$ as the identity function in the backward pass within a small neighborhood. The scales in Eq.~\eqref{eq:ifairy-scale} are recomputed at each step and are differentiable w.r.t.\ the masters via their dependence on $w$.

\paragraph{Remarks.}
(i) The codebook $\{\pm1,\pm i\}$ utilizes the full 2-bit budget and is symmetric on the unit circle, aligning well with complex geometry and attention scoring.
(ii) The procedure applies elementwise to $\mathbf{U}$ and $\mathbf{W}$; it is independent of the widely-linear equivalence and can be applied to any complex layer. (iii) To alleviate the computational overhead during training, we adopt \emph{Gauss's multiplication algorithm}. Standard complex multiplication $(a+ib)(c+id)$ requires 4 real multiplications. Gauss optimization reformulates this as:
\[
\RePart = ac - bd, \qquad \ImPart = (a+b)(c+d) - ac - bd.
\]
This computation requires only 3 real multiplications and 5 additions. In the context of large-scale Transformer layers where matrix multiplication dominates, this reduction from 4 to 3 real multiplications per element provides a theoretical $25\%$ reduction in FLOPs for the GEMM operations in both forward and backward passes.

\subsection{Recursive residual quantization}
\label{sec:residual-quant}

We further reduce quantization error by repeatedly quantizing the \emph{residual} left after the initial pass.
The core concept is straightforward: represent each complex weight as a short sum of very low-bit terms, where each new term corrects the approximation error of the previous ones.
We apply this to the complex weights in the widely-linear form (both $\mathbf{U}$ and $\mathbf{W}$), using the same iFairy-style projection and axis-wise scaling as in Section~\ref{sec:quant-ifairy}.

Intuitively, consider each complex weight as a point in the complex plane.
The first step projects it to one of four axis directions ($\pm1,\pm i$) and applies a scale for that axis, providing a coarse approximation.
The residual vector points from this approximation to the true weight.
We then project the residual to the same small codebook and scale it.
Adding this correction brings the approximation closer to the target.
Each subsequent stage repeats this process on the remaining residual; accuracy improves, and the residual magnitude decreases.

\paragraph{Formal definition of recursive residual quantization.}
Let $W\in\mathbb{C}^{n\times m}$ be a matrix of complex weights.
Fix a small codebook $\mathcal{S}$, e.g., $\mathcal{S}=\{\pm1,\pm i\}$ for 2-bit quantization.
Choose a small number of stages $T\in\{1,2,\ldots\}$.
Initialize $R^{(0)}\coloneqq W$ and, for $t=0,1,\dots,T-1$, perform:

\begin{itemize}
    \item \textbf{Quantization:}
    Apply PhaseQuant to $R^{(t)}$ to obtain $b(w)$ for each $w \in R^{(t)}$ according to Eq.~\eqref{eq:ifairy-proj}. Let $B^{(t)}=B^{(t)}_{\mathrm{re}}+i\,B^{(t)}_{\mathrm{im}}$. During this process, we also compute the scaling factors $s^{(t)}_{\mathrm{re}}$ and $s^{(t)}_{\mathrm{im}}$.
    \item \textbf{Dequantization and residual update:}
    \[
    \widehat{W}^{(t)} \;=\; s^{(t)}_{\mathrm{re}}\,B^{(t)}_{\mathrm{re}} \;+\; i\,s^{(t)}_{\mathrm{im}}\,B^{(t)}_{\mathrm{im}},
    \qquad
    R^{(t+1)} \;=\; R^{(t)} - \widehat{W}^{(t)}.
    \]
\end{itemize}
After $T$ stages, the deployed approximation is the finite sum:
\begin{equation}
\label{eq:res-sum}
W_q \;\approx\; \sum_{t=0}^{T-1} \widehat{W}^{(t)}.
\end{equation}
When $T=1$, this reduces to the basic PhaseQuant quantization. For $T=2$, one extra low-bit term is added per weight, typically removing most of the remaining error. Figure~\ref{fig:recursive} shows the workflow of recursive residual quantization with $T=2$.

\begin{figure}
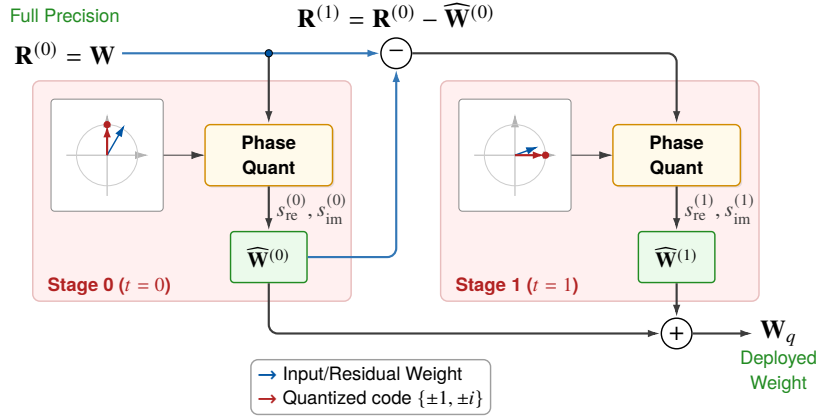

    \centering
    \includestandalone[width=0.7\textwidth]{Figure/recursive-darker}
    \caption{Illustration of recursive residual quantization with $T=2$.}
    \label{fig:recursive}
\end{figure}

\paragraph{Training with QAT.}
We maintain full-precision masters for $W$ and compute $\sum_{t}\widehat{W}^{(t)}$ in the forward pass.
The projection $b(\cdot)$ uses the hard nearest-codeword rule; in the backward pass, we apply a straight-through estimator so that gradients flow to the masters.
Scales are recomputed per step and are differentiable through their dependence on the residuals.
We apply the same procedure independently to $\mathbf{U}$ and $\mathbf{W}$ in every widely-linear layer.

\subsection{Storage Efficiency and Parallel Multiplication-Free Inference}
\label{sec:inference-analysis}

\paragraph{Storage efficiency.}
\label{sec:inference-analysis:storage}
\method{} achieves extreme storage compactness by utilizing the codebook $\mathcal{S}=\{\pm 1, \pm i\}$. Since $|\mathcal{S}|=4$, each quantized complex weight can be exactly encoded using 2 bits.
Consider the original real-valued weight matrix $\mathbf{R} \in \mathbb{R}^{(2n)\times(2m)}$, which contains a total of $4nm$ real parameters. Through our widely-linear transformation, $\mathbf{R}$ is reparameterized into two complex matrices $\mathbf{U}, \mathbf{W} \in \mathbb{C}^{n\times m}$, resulting in a total of $2nm$ complex weights.
The total storage cost for the quantized model (at stage $T=1$) is therefore $2nm \times 2 \text{ bits} = 4nm \text{ bits}$.
Dividing this by the original parameter count ($4nm$), \method{} effectively consumes exactly \textbf{1 bit per real parameter} on average.
When extending to recursive residual quantization with $T$ stages, the total memory footprint scales linearly as \textbf{$T$ bits per real parameter}.
For example, our primary configuration with $T=2$ occupies 2 bits per real parameter.

\paragraph{Parallel stagewise inference algorithm.}
The inference process for a widely-linear layer with $T$ recursive stages is detailed in Algorithm~\ref{alg:inference}. 
Since the discrete weight components $B^{(t)}_{\mathrm{re}}$ and $B^{(t)}_{\mathrm{im}}$ only contain values from $\{-1, 0, +1\}$, the matrix-vector products reduce to conditional accumulators. Specifically, multiplying by $+1$ is an addition, $-1$ is a subtraction, and $0$ is a skip. Multiplying by $\pm i$ simply involves swapping the real and imaginary components of the input activation $\mathbf{x}$ with a sign flip.
Crucially, the loop over stages $t=0 \dots T-1$ (lines 3--6) is data-independent and can be executed in parallel streams.

\begin{algorithm}[htbp]
\caption{Parallel Inference for \method{} Layer}
\label{alg:inference}
\begin{algorithmic}[1]
\REQUIRE Input $\mathbf{x} = \mathbf{x}_{\mathrm{re}} + i\mathbf{x}_{\mathrm{im}} \in \mathbb{C}^m$
\REQUIRE Quantized Codes $\{B^{(t)}_{\mathrm{re}}, B^{(t)}_{\mathrm{im}}\}_{t=0}^{T-1}$ where entries $\in \{-1, 0, 1\}$
\REQUIRE Scales $\{s^{(t)}_{\mathrm{re}}, s^{(t)}_{\mathrm{im}}\}_{t=0}^{T-1}$
\ENSURE Output $\mathbf{y} \in \mathbb{C}^n$
\STATE Initialize output accumulator $\mathbf{y} \leftarrow \mathbf{0}$
\STATE \textbf{parallel for} $t = 0$ to $T-1$ \textbf{do}
    \STATE \quad \textit{// Multiplication-free operations (Add/Sub only)}
    \STATE \quad $\mathbf{v}_{\mathrm{re}} \leftarrow B^{(t)}_{\mathrm{re}}\mathbf{x}_{\mathrm{re}} - B^{(t)}_{\mathrm{im}}\mathbf{x}_{\mathrm{im}}$
    \STATE \quad $\mathbf{v}_{\mathrm{im}} \leftarrow B^{(t)}_{\mathrm{re}}\mathbf{x}_{\mathrm{im}} + B^{(t)}_{\mathrm{im}}\mathbf{x}_{\mathrm{re}}$
    \STATE \quad \textit{// Apply scalar scales (broadcast)}
    \STATE \quad $\mathbf{y}^{(t)} \leftarrow s^{(t)}_{\mathrm{re}} \mathbf{v}_{\mathrm{re}} + i \, s^{(t)}_{\mathrm{im}} \mathbf{v}_{\mathrm{im}}$
    \STATE \quad \textbf{atomic add} $\mathbf{y} \leftarrow \mathbf{y} + \mathbf{y}^{(t)}$
\STATE \textbf{end parallel for}
\RETURN $\mathbf{y}$
\end{algorithmic}
\end{algorithm}

\paragraph{Latency and parallelism.}
While recursive residual quantization introduces a summation over $T$ terms ($\mathbf{y} = \sum \widehat{W}^{(t)} \mathbf{x}$), this does not imply a linear increase in latency.
Unlike autoregressive generation, where steps must be sequential, the terms $\widehat{W}^{(t)} \mathbf{x}$ are independent with respect to the input $\mathbf{x}$.
Given sufficient parallel compute units (which GPUs provide in abundance), the critical path latency is:
\begin{equation}
\label{eq:latency}
\mathcal{L}_{\text{total}} = \max_{t} \big( \mathcal{L}(\text{GEMM-free}^{(t)}) \big) + \mathcal{L}(\text{ReduceSum}),
\end{equation}
which is effectively $O(1)$ with respect to $T$. This allows us to use $T=2$ or $T=3$ to drastically improve accuracy without slowing down the inference stream.

\paragraph{LUT-Based optimization for CPU.}
Beyond multiplication-free arithmetic, the discreteness of our 2-bit weights enables highly efficient inference via Look-Up Tables (LUTs), particularly on CPUs~\cite{park2022lutgemm}. Following the principles of optimized kernels like \texttt{BitNet.cpp}~\cite{wang2025bitnetcpp} and T-MAC~\cite{wei2025tmac}, we can pack groups of four 2-bit complex weights into a single 8-bit index. By pre-computing the partial outcomes of these weight combinations with INT8 activations, the inner loop of matrix multiplication reduces to a simple table fetch and accumulation. This approach significantly accelerates execution by minimizing arithmetic instructions while retaining mathematical exactness.

\section{Evaluation}
\label{sec: eval}

In this section, we empirically validate the effectiveness of the \method{} framework. We primarily focus on the LLaMA-2 7B model to demonstrate that our method can successfully restore accuracy at extremely low bit-widths (1-bit and 2-bit effective) by reusing real-valued checkpoints, significantly outperforming real-valued quantization baselines.

\subsection{Experimental Setup}

\paragraph{Models and baselines.}
We conduct experiments on the LLaMA-2 7B model. We compare \method{} against the following baselines:
\begin{itemize}
    \item \textbf{FP16:} The original uncompressed LLaMA-2 7B model.
    \item \textbf{Real-valued QAT Baselines:} To ensure a fair comparison regarding the topological advantage of the complex domain, we implement two strong real-valued QAT baselines trained under the same settings:
    \begin{itemize}
        \item \textit{Real-Binary (1-bit):} Weights are quantized to $\{+1, -1\}$, similar to BitNet~\cite{wang2023bitnet}.
        \item \textit{Real-Ternary (1.58-bit):} Weights are quantized to $\{+1, 0, -1\}$, similar to BitNet b1.58~\cite{ma2024bitnetb1.58}.
    \end{itemize}
    \item \textbf{PTQ Baselines:} We also include results from GPTQ~\cite{frantar2022gptq}, QuIP\#~\cite{tseng2024quipsharp} and AQLM~\cite{egiazarian2024extreme} for reference.
\end{itemize}

\paragraph{Datasets and metrics.}
We evaluate the quantized models on both language modeling capabilities and downstream reasoning tasks using the \texttt{lm-eval-harness} framework~\cite{eval-harness}.
\begin{itemize}
    \item \textbf{Perplexity (PPL):} We report perplexity on the validation set of C4~\cite{raffel2020exploring_c4}.
    \item \textbf{Zero-shot Tasks:} We evaluate zero-shot accuracy on several common sense reasoning benchmarks: 
    ARC-Easy~\cite{yadav2019quick}, ARC-Challenge~\cite{yadav2019quick}, HellaSwag~\cite{zellers2019hellaswag}, PIQA~\cite{bisk2020piqa}, and Winogrande~\cite{sakaguchi2021winogrande}.
\end{itemize}

\paragraph{Implementation details.}
We implement \method{} using PyTorch. The real-valued pretrained weights are first transformed into the widely-linear complex form. We then continue pre-training with the QAT method on a subset of the RedPajama dataset~\cite{weber2024redpajama} for 30 billion tokens. We use the AdamW~\cite{loshchilov2017adamw} optimizer with a Warmup-Stable-Decay (WSD) learning rate schedule~\cite{bi2024deepseek, team2025minicpm4}. The global batch size is set to 1 million tokens. 
Unless otherwise specified, we report results for two configurations of \method{}:
\begin{itemize}
    \item \textbf{\method{}-W1 (1-bit):} Uses the codebook $\mathcal{S}=\{\pm 1, \pm i\}$ for the base complex layer. As shown in Section~\ref{sec:inference-analysis}, this method consumes 1 bit per parameter.
    \item \textbf{\method{}-W2 (2-bit):} Applies one additional step of recursive residual quantization ($T=2$), adding another effective 1 bit per parameter.
\end{itemize}

\subsection{Main Results}

\paragraph{Perplexity evaluation.}
Table~\ref{tab:main_results} presents the perplexity results on C4. \method{} significantly outperforms the real-valued baselines at the same or even higher bit budgets.
Specifically, \method{}-W1 achieves a PPL of 11.03, surpassing the Real-Binary (1-bit) baseline (11.75), and even slightly outperforming the Real-Ternary (1.58 bit) baseline (11.03).
Furthermore, with just one step of recursive residual quantization, \method{}-W2 (2-bit) achieves a remarkable PPL of 7.85, significantly closing the gap to the FP16 baseline (6.63).
It is worth noting that \method{}-W2 outperforms state-of-the-art 2-bit PTQ methods such as AQLM (8.54) and QuIP\# (11.01), and even surpasses 3-bit GPTQ (10.61), demonstrating the efficiency of our residual correction mechanism.

\paragraph{Zero-shot performance.}
Table~\ref{tab:main_results} summarizes the accuracy on five downstream tasks. \method{} demonstrates strong generalization capabilities.
Notably, \method{}-W1 achieves an average accuracy of 48.66\%, outperforming the Real-Binary baseline (46.21\%) at the same effective bit-width, while trailing the Real-Ternary (1.58-bit) model by only a narrow margin (0.04\%).
\method{}-W2 further elevates the performance to 62.00\%, which is highly competitive with the FP16 baseline (64.72\%) and significantly outperforms the leading 2-bit PTQ method AQLM (57.28\%). This confirms that our phase-aware recursive quantization effectively preserves the model's capabilities even under extreme compression.

\begin{table}[htbp]
\centering
\small
\caption{C4 Perplexity and Zero-shot Accuracy comparison on LLaMA-2 7B. Avg. denotes the average accuracy across the 5 tasks. Results for AQLM, QuIP\#, and GPTQ are sourced from~\cite{egiazarian2024extreme}. 
}
\label{tab:main_results}
\resizebox{\textwidth}{!}{
\begin{tabular}{lccccccccc}
\toprule
 &  & \textbf{PPL}$\downarrow$ & \multicolumn{6}{c}{\textbf{Zero-shot Accuracy (\%)}$\uparrow$} \\
\cmidrule(lr){3-3} \cmidrule(lr){4-9}
\textbf{Method} & \textbf{Bits} & \textbf{C4}  & \textbf{ARC-e} & \textbf{ARC-c} & \textbf{HellaSwag} & \textbf{PIQA} & \textbf{Winogrande} & \textbf{Avg.} \\
\midrule
LLaMA-2 (FP16) & 16 & 6.63 & 75.59 & 43.17 & 57.06 & 77.91 & 69.85 & 64.72 \\
\midrule
GPTQ~\cite{frantar2022gptq} & 3 & 10.61 & 58.46 & 31.06 & 45.21 & 71.49 & 59.19 & 53.08 \\
QuIP\#~\cite{tseng2024quipsharp} & 2 & 11.01 & 55.56 & 28.84 & 42.94 & 71.38 & 62.43 & 52.23 \\
AQLM~\cite{egiazarian2024extreme} & 2 & 8.54 & 63.68 & 32.76 & 49.55 & 74.76 & 65.67 & 57.28 \\
\midrule
Real-Binary (QAT) & 1 & 11.75 & 53.32 & 22.70 & 35.57 & 66.81 & 52.64 & 46.21 \\
\textbf{\method{}-W1 (Ours)} & 1 & \textbf{11.03} & \textbf{56.56} & \textbf{24.82} & \textbf{38.19} & \textbf{70.08} & \textbf{53.67} & \textbf{48.66} \\
\midrule
Real-Ternary (QAT) & 1.58 & 11.06 & 55.93 & 24.15 & 38.43 & 69.80 & 55.17 & 48.70 \\
\textbf{\method{}-W2 (Ours)} & 2 & \textbf{7.85} & \textbf{72.73} & \textbf{39.76} & \textbf{53.33} & \textbf{76.17} & \textbf{68.03} & \textbf{62.00} \\
\bottomrule
\end{tabular}
}
\end{table}

\subsection{Ablation Studies}

\paragraph{Effectiveness of recursive residual quantization.}
We analyze the impact of the recursive depth $T$ on model performance. Table~\ref{tab:ablation_recursive} compares \method{}-W1, \method{}-W2, and a further refined version \method{}-W3 ($T=3$), which corresponds to an effective bit-width of 3 bits per parameter.
As observed, increasing $T$ from 1 to 2 results in a substantial reduction in C4 perplexity (20.76\% improvement) and a significant boost in zero-shot accuracy of 19.03\%.
However, upgrading from W2 to W3 yields diminishing returns, with only marginal improvements in PPL (4.11\%) and average accuracy (0.88\%).
Given that W3 increases the memory footprint by 50\% compared to W2, it suggests that $T=2$ is a good tradeoff between model accuracy and storage efficiency for extreme quantization.

\begin{table}[htbp]
\centering
\small
\caption{Ablation study on the number of recursive stages $T$ for \method{} under the learning rate of 5e-4. \method{}-W3 represents using 3 recursive stages with effective 3 bits.}
\label{tab:ablation_recursive}
\resizebox{\textwidth}{!}{
\begin{tabular}{lccccccccccc}
\toprule
 &  & \multicolumn{2}{c}{\textbf{PPL}$\downarrow$} & \multicolumn{7}{c}{\textbf{Zero-shot Accuracy (\%)}$\uparrow$} \\
\cmidrule(lr){3-4} \cmidrule(lr){5-11}
\textbf{Method} & \textbf{Bits} & \textbf{C4} & \textbf{Decrease} & \textbf{ARC-e} & \textbf{ARC-c} & \textbf{HellaSwag} & \textbf{PIQA} & \textbf{Winogrande} & \textbf{Avg.} & \textbf{Increase}\\
\midrule
\method{}-W1 & 1 & 11.03 & - & 56.56 & 24.82 & 38.19 & 70.08 & 53.67 & 48.66 & -\\
\method{}-W2 & 2 & 8.74 & 20.76\% & 69.70 & 33.02 & 48.95 & 74.70 & 63.22 & 57.92 & 19.03\%\\
\method{}-W3 & 3 & 8.38 & 4.11\% & 69.36 & 32.17 & 50.74 & 76.01 & 63.85 & 58.43 & 0.88\%\\
\bottomrule
\end{tabular}
}
\end{table}

\paragraph{Sensitivity to learning rate.}
Training extremely low-bit networks is notoriously unstable and sensitive to hyperparameters. We evaluate the robustness of \method{}-W2 by training with varying learning rate schedules of different decay times.
We use the WSD scheduler. WSD consists of three phases: a linear warmup from a small initial value to a peak, a stable phase that holds the learning rate at the peak for a period to stabilize training, and an optional cosine decay phase to reduce the learning rate later. To study sensitivity to the LR magnitude, we evaluate three WSD variants (denoted as LR1, LR2, and LR3). \textbf{LR1}: linear warmup to a peak of $3\times 10^{-5}$ in 50 steps, then stable at the peak for the remainder of training with no decay applied. \textbf{LR2}: same as LR1 but decay to $5\times 10^{-6}$ from step 9000 within 2000 steps. \textbf{LR3}: same as LR2 but ecay to $1\times 10^{-6}$ from step 19000 within 1000 steps.

Figure~\ref{fig:ablation_lr} illustrates the training dynamics.
Figure~\ref{fig:ablation_lr}(a) depicts the Warmup-Stable-Decay (WSD) learning rate schedules employed during training.
Figure~\ref{fig:ablation_lr}(b) plots the training loss curves corresponding to different learning rate schedules.
As observed, our method is robust across different learning rate schedules. Furthermore, the effectiveness of the WSD scheduler is clearly visible: as training progresses into the decay phase, the reduction in learning rate enables the model to converge to a lower final loss, confirming that a well-scheduled decay is crucial for optimizing quantized models.
Table~\ref{tab:ablation_lr} shows the final PPL and task performance with different learning rate schedules.

\begin{figure}[htbp]
  \centering
  \begin{subfigure}[b]{0.35\textwidth}
    \includegraphics[width=\linewidth]{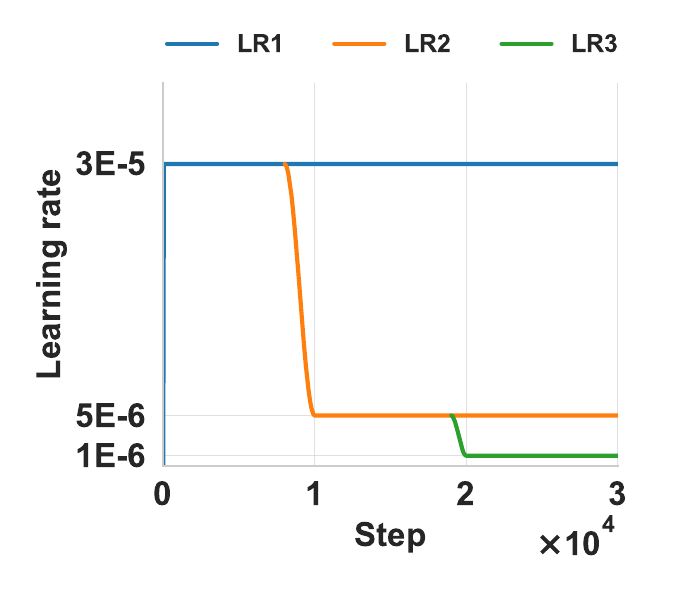} 
    \caption{Learning Rate Schedule}
    \label{fig:lr_schedule}
  \end{subfigure}
  \hspace{0.7cm}
  \begin{subfigure}[b]{0.35\textwidth}
    \includegraphics[width=\linewidth]{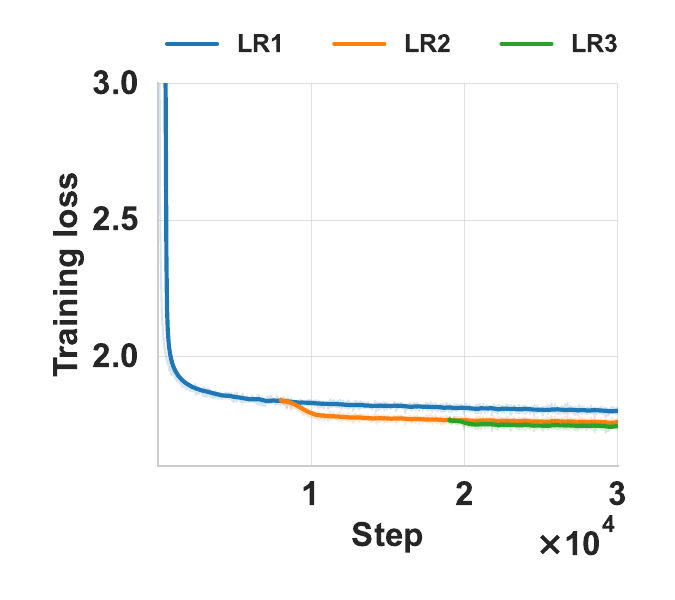}
    \caption{Training Loss Curves}
    \label{fig:loss_curves}
  \end{subfigure}
  \caption{Sensitivity analysis of \method{} to learning rate. (a) The WSD learning rate schedules. (b) Training loss curves under different learning rate schedules.}
  \label{fig:ablation_lr}
\end{figure}

\begin{table}[htbp]
\centering
\small
\caption{Ablation study on the learning rate schedule with \method{}-W2.}
\label{tab:ablation_lr}
\resizebox{\textwidth}{!}{
\begin{tabular}{lcccccccc}
\toprule
 & & \multicolumn{1}{c}{\textbf{PPL}$\downarrow$} & \multicolumn{6}{c}{\textbf{Zero-shot Accuracy (\%)}$\uparrow$} \\
\cmidrule(lr){3-3} \cmidrule(lr){4-9}
\textbf{Method} & \textbf{Bits} & \textbf{C4} & \textbf{ARC-e} & \textbf{ARC-c} & \textbf{HellaSwag} & \textbf{PIQA} & \textbf{Winogrande} & \textbf{Avg.}\\
\midrule
\method{}-W2-LR1 & 2 & 8.42 & 69.49 & 36.60 & 51.25 & 75.03 & 65.04 & 59.48 \\
\method{}-W2-LR2 & 2 & 7.98 & 72.39 & 38.23 & 53.13 & 76.01 & 66.85 & 61.32\\
\method{}-W3-LR3 & 2 & 7.85 & 72.73 & 39.76 & 53.33 & 76.17 & 68.03 & 62.00 \\
\bottomrule
\end{tabular}
}
\end{table}

\section{Conclusion}
\label{sec: conclusion}

We propose \method{}, a universal framework that bridges the representational capacity of complex-valued LLMs with the practical utility of pre-trained real-valued LLMs. By deriving an exact, widely-linear representation, we enable the seamless conversion of real-valued layers into the complex domain, allowing for the reuse of existing checkpoints without prohibitive retraining. 
Leveraging a phase-aware codebook $\{\pm 1, \pm i\}$ and a recursive residual quantization mechanism, \method{} maximizes the information density of extremely low-bit budgets. 
Experiments on LLaMA-2 7B demonstrate that our approach outperforms state-of-the-art real-valued binary and ternary methods, restoring performance to near-FP16 levels at an effective 2-bit precision.

\paragraph{Future work.}
Several promising avenues remain for future exploration. First, we aim to develop specialized CUDA kernels and CPU optimization techniques to fully exploit the multiplication-free nature of our quantization scheme, accelerating both training and inference on commodity hardware. Second, we plan to scale \method{} to larger foundation models (e.g., LLaMA-3 70B) and multimodal architectures to verify its universality. Third, further theoretical investigation into the complex-valued loss landscape could provide deeper insights into the robustness observed in low-bit regimes. 
Finally, and perhaps most critically, our current implementation was limited to 30 billion tokens due to computational constraints. We hypothesize that the complex-valued representation possesses a superior capacity that has yet to be fully exploited; with extended training on larger datasets, we believe \method{} holds the potential to not merely match, but surpass the accuracy of the original full-precision baselines.

\printbibliography
\end{document}